\newcommand{\printfnsymbol}[1]{%
  \textsuperscript{\@fnsymbol{#1}}%
}
\DeclareMathOperator\supp{supp}
\newtheorem{lemma}{Lemma}
\newtheorem{prop}{Proposition}
\newcommand{\norm}[1]{\left\lVert#1\right\rVert}
\begin{document}
% The file aaai.sty is the style file for AAAI Press 
% proceedings, working notes, and technical reports.
%
\title{Discrepancy Minimization in Domain Generalization with Generative Nearest Neighbors}
\author{
Prashant Pandey,\textsuperscript{\rm 1}
Mrigank Raman\thanks{equal contribution},\textsuperscript{\rm 1}
Sumanth Varambally\printfnsymbol{1},\textsuperscript{\rm 1}
Prathosh AP\textsuperscript{\rm 1}\\
\textsuperscript{\rm 1}IIT Delhi\\
% \textsuperscript{\rm 2}IIT Delhi\\
% \textsuperscript{\rm 3}IIT Delhi,
\{bsz178495, mt1170736, mt6170855, prathoshap\}@iitd.ac.in
}
% \author{Anonymous}
\maketitle
\begin{abstract}
\begin{quote}
Domain generalization (DG) deals with the problem of domain shift where a machine learning model trained on multiple-source domains fail to generalize well on a target domain with different statistics. Multiple approaches have been proposed to solve the problem of domain generalization by learning domain invariant representations across the source domains that fail to guarantee generalization on the shifted target domain. We propose a Generative Nearest Neighbor based Discrepancy Minimization (GNNDM) method which provides a theoretical guarantee that is upper bounded by the error in the labeling process of the target. We employ a Domain Discrepancy Minimization Network (DDMN) that learns domain agnostic features to produce a single source domain while preserving the class labels of the data points. Features extracted from this source domain are learned using a generative model whose latent space is used as a sampler to retrieve the nearest neighbors for the target data points. The proposed method does not require access to the domain labels (a more realistic scenario) as opposed to the existing approaches. Empirically, we show the efficacy of our method on two datasets: PACS and VLCS. Through extensive experimentation, we demonstrate the effectiveness of the proposed method that outperforms several state-of-the-art DG methods.
\end{quote}
\end{abstract}

\section{Introduction}
\begin{figure}
\includegraphics[width=.47\textwidth,height=.16\textwidth]{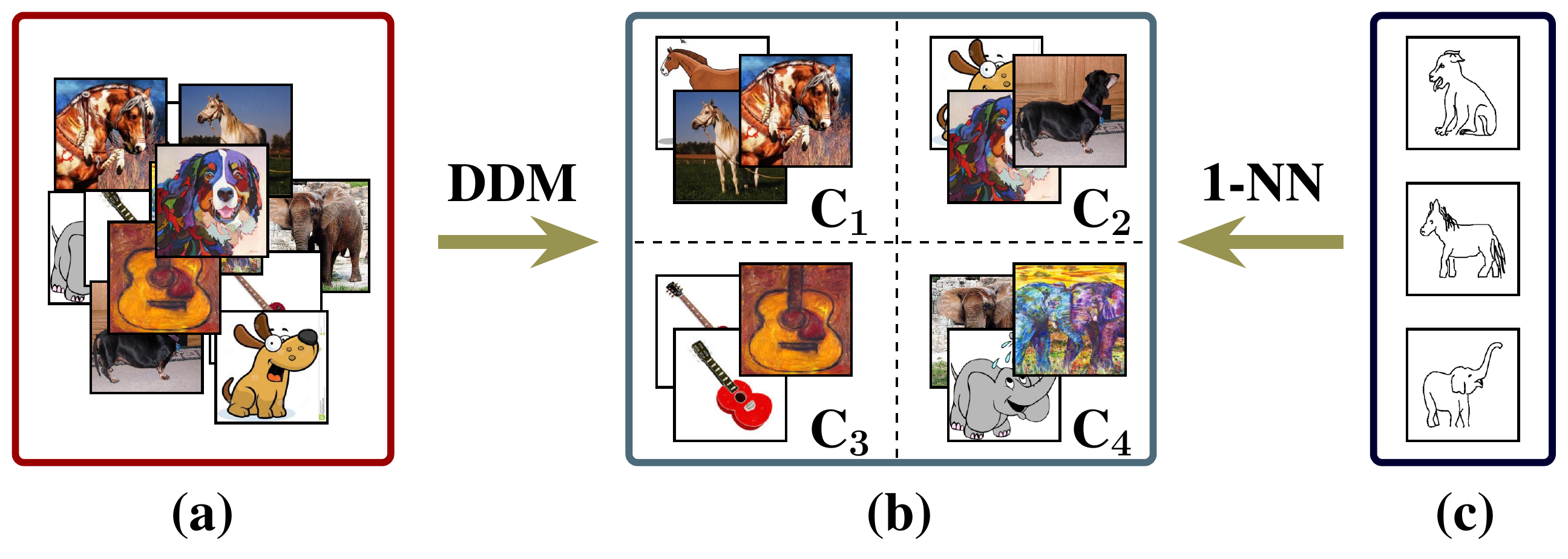}
\caption{(a) is the muti-source data space without domain label information for the data points. We apply Domain Discrepancy Minimization (DDM) to make the divergence zero between the source domains. (b) shows domain agnostic source space with class labels preserved for each data point. (c) is the target data space. For all the target data points, nearest neighbors are sampled from (b) using a generative approach that resembles 1-Nearest Neighbor (1-NN) algorithm whose performance is upper bounded by the labeling error in the target domain. }
\label{fig:dgh}
\end{figure}
Past many years, supervised methods have been employed to solve problems in Machine Learning. Powerful deep neural networks \citep{lee2015deeply} have shown remarkable success in computer vision application where the train and test datasets are assumed to be sampled from the same distribution. In many real world problems, this assumption doesn't hold \citep{torralba2011unbiased} and often the test data (or target domain) is outside the domain of the training dataset (source domain). In an autonomous driving system, a trained model should generalize when the surrounding objects, weather or lighting conditions vary \citep{alcorn2019strike}. Similarly, the medical imaging datasets should generalize across the domains when they are collected and processed at different medical centers under different settings \citep{albadawy2018deep} that include variations in microscope camera, staining process etc. Many solutions exist in literature that facilitate the generalization of target data by reducing the domain shift between source and the target domains. One straight forward strategy is to acquire labels for the target domain to fine-tune the models learned with source data. Since acquisition of labels for every new target domain is costly and time consuming, this approach is practically infeasible. Domain Adaptation (DA) \citep{tzeng2017adversarial, ganin2015unsupervised, hoffman2018cycada, sun2016return, long2016unsupervised, bousmalis2017unsupervised, murez2018image, panareda2017open, pandey2020skin} utilizes labels from the source domain and unlabeled examples from the target dataset to minimize the domain shift. As DA methods require unlabeled target data to retrain the model for every new target domain, its applicability is restricted. 

Domain generalization (DG) \citep{muandet2013domain,li2017deeper,ghifary2015domain,li2018learning,balaji2018metareg,li2019episodic} addresses a harder problem where the model trained on multiple source domains should generalize on completely unseen target domains without the need to re-train the model for every new target domain. Existing methods tackle the DG problem by learning domain invariant feature representations using adversarial methods \citep{li2018domain, li2018deep} or meta-learning approaches where held-out source domains are utilized to simulate the domain shift \citep{li2018learning, balaji2018metareg} or data augmentation techniques \citep{volpi2018generalizing, zhou2020deep} that facilitate procedure for synthesizing data from fictitious target domains and augmenting the source dataset with this synthesized data.

Most of the adversarial methods that produce domain invariant feature representations, lose the class information of the data points during domain alignment. While projecting the features into a domain invariant feature space, the model may not be able to retain the finer details for an image which might include useful class-discriminative information. For example, if the color and texture information is removed from two images containing a sheep and a horse, both the resulting images may look alike although they belong to two different classes. Also, even though the existing meta-learning and data augmentation methods tend to minimize the discrepancy between the source and the target domains empirically, they provide no theoretical guarantee for minimization of the domain shift between the two domains.

In the DG setting, the risk for an arbitrary model has three components: a) source risk on a classifier, b) risk due to divergence across the source domains, c) risk due to divergence between the source and the target domains. We propose a Generative Nearest Neighbor based Discrepancy  Minimization (GNNDM) method that employs Domain Discrepancy  Minimization Network (DDMN) to bring the divergence across the source domains to zero while preserving the class information for the data points. \textit{The idea is to remove the domain bias from each data point by grouping the data points belonging to the same class together irrespective of the domain they belong to}, as shown in Fig. \ref{fig:dgh}. These domain agnostic features are learned to create a generative latent space using a generative model like a Variational Auto-Encoder \citep{kingma2013auto}. For the target data points, we retrieve the nearest neighbors from the generative latent space that improves the performance on the classifier trained with domain agnostic features (from DDMN). We show that by using the nearest neighbor algorithm to retrieve the neighbors for the target examples, \textit{the risk on the target domain is upper bounded by the labeling error in the target domain only.} It should be noted that GNNDM does not assume any specific form for the domain shift between the source and target distributions and it can be arbitrarily large. Also, since it is easier to acquire source data without domain labels \citep{matsuura2020domain}, GNNDM doesn't require them for the task of DG. With these theoretical guarantees, our proposed method is shown to outperform the state-of-the-art DG methods on standard datasets.

% \textit{the risk due to divergence between source and target domains is upper bounded by the labeling error in the target domain only.}

\section{Related Work}
%The existing domain generalization literature can be grouped into the following 3 categories:\newline\newline
\textbf{Meta-learning} : Meta-learning methods aim to improve model robustness against unseen domains by simulating domain shift during training. This is done by splitting the training set into a meta-train and meta-test set. \citep{li2018learning} provide a general framework for meta-learning based domain generalization methods, where model parameters are updated to minimize loss over the meta-train and meta-test domains in a coordinated manner. \citep{balaji2018metareg} propose using a learnt regularizer network which is eventually used to regularize the learning objective of a domain-independent task network. \citep{li2019episodic} train separate feature extractors and classifiers on each of the source domains and minimize the loss on mismatched pairs of feature extractors and classifiers to improve model robustness. However, it is unclear how meta-learning methods generalize to the unseen target domain since they are only trained on the simulated domain shift from the source domains.
\newline \newline
\textbf{Data augmentation}: Data augmentation is known to be an effective method for regularization and improving generalization \citep{hernandez2018further}. Commonly used data augmentation techniques for images include rotation, flipping, random cropping, random colour distortions, amongst others. While these geometric transformations improve generalization in the traditional image classification setting, they cannot account for distributional shifts.
\citep{shankar2018generalizing} use gradients from a domain classifier to perturb images. However, these perturbations are subtle and might not be reflective of practically observed domain shift. \citep{zhou2020deep} aim to address this using an adversarial procedure to train a transformation network, which produces an image translation that aims to generate novel domains while retaining class information. While these domains are quite unlike the source domains, it is not apparent how indicative these generated domain shifts are of practically observed domain differences.\newline \newline
\textbf{Domain-invariant representations}: Another common theme pervasive in domain generalization literature is to transform data into a lower-dimensional domain-invariant representation which retains discriminative class information. \citep{ghifary2015domain} learn an autoencoder to extract domain invariant features by reconstructing inter and cross domain images. \citep{li2018domain} use an adversarial autoencoder, with the Maximum Mean Discrepancy (MMD) measure used to align the representations from the source domains. They then match the latent space to a Laplacian prior using adversarial learning. \citep{dou2019domain} employ episodic training to simulate domain shift, while minimizing a global class alignment loss and local sample clustering objective to cluster points of the same points together. \newline \newline
A noteworthy observation here is that all of the above methods require domain labels, which might not be a practical assumption. \citep{carlucci2019domain} aimed to solve the problem of domain generalization without domain labels by learning the auxiliary task of solving jigsaw puzzles. \citep{matsuura2020domain} explicitly address this issue by using pseudo-labels inferred by clustering the domain discriminative features. They train a domain classifier against these pseudo-labels, which is further used to adversarially train a domain-invariant feature extractor. \citep{motiian2017unified} use a semantic alignment loss that is similar to the cosine similarity loss that is optimized by the DDMN in our work. Unlike their work which uses it as a regularizer loss while training a classifer, we use the DDMN to learn a similarity metric that we further use to obtain the nearest neighbor in the embedding space using an iterative search process during inference. We highlight the advantages of our method through both theoretical results and extensive experimentation and obtain state-of-the-art performance on standard domain generalization benchmarks.
\section{Background and Theory}
\subsection{Preliminaries}
Let $\mathcal{X}$ denote the space from which data originates and let $\mathcal{Y}$ be the set of all possible labels. We denote with $\mathcal{H}$ the space of hypotheses where each hypothesis $h$ in $\mathcal{H}$ maps a point from $\mathcal{X}$ to a label from $\mathcal{Y}$. We will use the following definitions for our analysis:
\begin{itemize}
\item Loss Function: Denoted by $\mathcal{L}$, it quantifies how different $h(x)$ is from its original label $y\in \mathcal{Y}$ for a given data point $x \in \mathcal{X}$.
\newline
\item Domain: A domain is defined by the tuple $(\mathcal{D}, g_\mathcal{D})$ where $\mathcal{D}$ is a probability distribution over $\mathcal{X}$ and $g_\mathcal{D} :\mathcal{X} \to \mathcal{Y}$ is a function that assigns labels. For brevity, throughout this paper we denote a domain $(\mathcal{D}, g_\mathcal{D})$ with $\mathcal{D}$.
\newline
\item Risk: Given a hypothesis $h \in \mathcal{H}$ and a domain $\mathcal{D}$ we define the risk of the hypothesis $h$ on the domain $\mathcal{D}$ as :
 \begin{equation}
 \mathcal{R}_{\mathcal{D}}[h] = \mathbb{E}_{x \sim \mathcal{D}} [\mathcal{L}(x, g(x))]
\end{equation}
\item $\mathcal{H}$ divergence: \citep{kifer2004detecting} introduced the $\mathcal{H}$ divergence for quantifying the domain shift between two domains. Given a hypothesis space $\mathcal{H}$ and two domains $\mathcal{D}$ and $\mathcal{D'}$ the $\mathcal{H}$ divergence is defined as:
\begin{equation}
\Delta_{\mathcal{H}}(\mathcal{D}, \mathcal{D'}) = 2\sup_{h \in \mathcal{H}}\left|\mathbf{P}_{x \in \mathcal{D}}[h(x) = 1] - \mathbf{P}_{x \in \mathcal{D'}}[h(x) = 1]\right|
\end{equation}
% \item Proxy A-Distance(PAD): It is generally hard to compute the $\mathcal{H}$ Divergence exactly. But \citep{ben2007analysis}, \citep{ben2010theory} demonstrated that $\hat{\Delta}_{\mathcal{H}}(\mathcal{D}, \mathcal{D'})$ can be approximated by the Proxy A-Distance.
\end{itemize}
With these definitions we will now move on to define the domain generalization task. In this task we have a total of N domains out of which there are $|S|$ source domains and $|T|$ target domains. The source domains are denoted by $\mathcal{D}_{i}^{S}$ where $i \in [|S|]$ and the target domains are denoted by $\mathcal{D}_{j}^{T}$ where $j \in [|T|]$. The objective is to train a classifier on the source domains that can predict the labels of the examples in the target domains.

\subsection{Theoretical Analysis}
As we are working on the problem of domain generalization, we consider the labeling functions of all the domains to be identical denoted by $g: \mathcal{X} \to \mathcal{Y}$. Without loss of generality we consider the problem to be binary classification in our theoretical analysis.
We use existing generalization bounds for risk on the unseen domain $\mathcal{D}^{T}$ from previous work as the starting point. Specifically, we use the following inequality from  \citep{albuquerque2019generalizing}.
\begin{prop}
Define the convex hull $\Omega_S$ as the set of mixture distributions $\Omega_S = \{\overline{\mathcal{D}} : \overline{\mathcal{D}} = \sum_{i=1}^{|S|} \alpha_i \mathcal{D}_{i}^S, 0 \leq \alpha_i \leq 1, \sum_{i=1}^{|S|} \alpha_i = 1 \}$. Further, define $\overline{D}_j^T = \arg\min_{\alpha_1, ..., \alpha_{|S|}} \Delta_\mathcal{H} [{\mathcal{D}_{j}^{T}}, \sum_{i=1}^{|S|}\alpha_{i,j} \mathcal{D}_i^S]$. Then, we have:
\begin{equation}
{\mathcal{R}_{\mathcal{D}_j^T}}[h] \leq \sum_{i=1}^{|S|} \alpha_{i,j} {\mathcal{R}_{\mathcal{D}_i^S}}[h] + \frac{\delta+\lambda}{2} + \mu_{\alpha_j}    
\end{equation}
where $\delta = \sup_{i,k \in [|S|]} \Delta_{\mathcal{H}}[\mathcal{D}_i^S, \mathcal{D}_k^S]$, $\lambda = \Delta_{\mathcal{H}}[\mathcal{D}_j^T, \overline{\mathcal{D}}_j^T]$ and $\mu_{\alpha_j}$ is the minimum sum of risks achieved by some $\eta \in \mathcal{H}$ on $\mathcal{D}_j^T$ and $\overline{\mathcal{D}}_j^T$  
\end{prop}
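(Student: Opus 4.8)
The plan is to obtain this multi-source bound by combining the classical two-domain adaptation inequality with two structural properties of $\Delta_\mathcal{H}$: a triangle inequality and convexity over the simplex of mixture weights. The workhorse is the classical single-source adaptation bound of Ben-David et al.: for any domains $\mathcal{A}, \mathcal{B}$ and any $h \in \mathcal{H}$, $\mathcal{R}_\mathcal{A}[h] \le \mathcal{R}_\mathcal{B}[h] + \tfrac{1}{2}\Delta_\mathcal{H}[\mathcal{A}, \mathcal{B}] + \lambda_{\mathcal{A},\mathcal{B}}$, where $\lambda_{\mathcal{A},\mathcal{B}} = \min_{\eta \in \mathcal{H}}(\mathcal{R}_\mathcal{A}[\eta] + \mathcal{R}_\mathcal{B}[\eta])$ is the ideal joint risk. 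I would treat this as a black box and instantiate it carefully.

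First I would apply the workhorse with $\mathcal{A} = \mathcal{D}_j^T$ and $\mathcal{B} = \sum_{i} \alpha_{i,j}\mathcal{D}_i^S \in \Omega_S$, the source mixture carrying the weights $\alpha_{i,j}$ that appear in the risk sum. The ideal-joint-risk term produced this way is exactly $\mu_{\alpha_j}$, the minimum sum of risks over $\eta \in \mathcal{H}$ on the target and on the chosen hull point. For the leading term, I would use linearity of expectation: since the mixture distribution is linear in its components, $\mathcal{R}_{\sum_i \alpha_{i,j}\mathcal{D}_i^S}[h] = \sum_{i=1}^{|S|}\alpha_{i,j}\,\mathcal{R}_{\mathcal{D}_i^S}[h]$, which recovers the weighted sum of source risks verbatim.

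The crux is to upper bound the divergence $\Delta_\mathcal{H}[\mathcal{D}_j^T, \sum_i \alpha_{i,j}\mathcal{D}_i^S]$ by $\delta + \lambda$ so that halving yields $\tfrac{\delta+\lambda}{2}$. I would route it through the hull point $\overline{\mathcal{D}}_j^T$ that is closest to the target, using two facts. First, $\Delta_\mathcal{H}$ obeys a triangle inequality: for fixed $h$ the scalars $\mathbf{P}[h=1]$ satisfy $|a-c| \le |a-b| + |b-c|$, and taking a supremum with $\sup(f+g) \le \sup f + \sup g$ gives $\Delta_\mathcal{H}[\mathcal{D}, \mathcal{D}''] \le \Delta_\mathcal{H}[\mathcal{D}, \mathcal{D}'] + \Delta_\mathcal{H}[\mathcal{D}', \mathcal{D}'']$; splitting at $\overline{\mathcal{D}}_j^T$ isolates the target-to-hull distance $\Delta_\mathcal{H}[\mathcal{D}_j^T, \overline{\mathcal{D}}_j^T] = \lambda$ and an intra-hull remainder. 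Second, $\Delta_\mathcal{H}$ is convex in each distribution argument, because $\mathbf{P}_{\sum_i \alpha_i \mathcal{D}_i}[h=1] = \sum_i \alpha_i \mathbf{P}_{\mathcal{D}_i}[h=1]$ is affine in the weights and a pointwise supremum of absolute values of affine maps is convex; hence the divergence between any two points of $\Omega_S$ attains its maximum at a pair of vertices and is at most $\sup_{i,k}\Delta_\mathcal{H}[\mathcal{D}_i^S, \mathcal{D}_k^S] = \delta$. Applying this to the intra-hull remainder and combining with the triangle inequality gives the divergence bound $\lambda + \delta$, completing the argument.

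I expect the convexity-and-diameter step to be the main obstacle, since it is the only genuinely geometric ingredient and the sole route by which the pairwise source term $\delta$ enters; everything else is linearity and inequality chasing. A secondary point I would have to pin down is the reconciliation of the two mixture weightings — the weights used in the risk sum versus those minimizing the target-to-hull divergence — which is precisely what forces the triangle-inequality detour and keeps $\delta$ from being slack, and I would also check that the stated $\Delta_\mathcal{H}$ form of the workhorse is the correct one under the paper's conventions rather than the symmetric-difference divergence usually invoked in the classical bound.
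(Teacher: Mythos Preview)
The paper does not prove this proposition at all: it is quoted verbatim as a result from \citet{albuquerque2019generalizing} and used as a starting point, so there is no in-paper argument to compare against. Your outline is essentially the proof that appears in that reference.

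One remark on your plan itself. As the statement is written here, the weights $\alpha_{i,j}$ in the source-risk sum are \emph{the same} weights that define the closest hull point $\overline{\mathcal D}_j^T$; hence your $\mathcal B = \sum_i \alpha_{i,j}\mathcal D_i^S$ already equals $\overline{\mathcal D}_j^T$, and the divergence you need to control is exactly $\Delta_{\mathcal H}[\mathcal D_j^T,\overline{\mathcal D}_j^T]=\lambda$. The triangle-inequality-and-convexity detour you describe is therefore unnecessary for this particular formulation: the bound follows directly from the single-source inequality plus linearity of the risk in the mixture, and the $\delta/2$ term can be tacked on for free since $\delta\ge 0$. Your ``secondary point'' about reconciling two different weightings is the mechanism that generates $\delta$ in the original Albuquerque et~al.\ statement (where the empirical-risk weights need not be the divergence-minimizing ones), but in the version transcribed here that tension has already been collapsed. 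Your caution about $\Delta_{\mathcal H}$ versus $\mathcal H\Delta\mathcal H$ is well placed; the cited source actually works with the symmetric-difference divergence, and the paper's use of $\Delta_{\mathcal H}$ is a notational simplification.
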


% \begin{prop}
% For source domain $\mathcal{D^S}$, target domain $\mathcal{D^T}$ and a hypothesis $h$, the following inequality holds
% \begin{align}
% &\mathcal{R}_{\mathcal{D}^T}[h] \leq \mathcal{R}_{\mathcal{D}^S}[h] +  d_1(\mathcal{D^S}, \mathcal{D^T}) + \nonumber \\
% &\min\left\{ \mathbb{E}_\mathcal{D^S}\left[\left| g_{D^S}(x) - g_{D^T}(x)\right|\right] , \mathbb{E}_\mathcal{D^T}\left[\left| g_{D^S}(x) - g_{D^T}(x)\right|\right]\right\}
% \end{align}
% where $d_1(\mathcal{D^S},\mathcal{D^T})$ denotes the variational distance between the source domain $\mathcal{D^S}$ and the target domain $\mathcal{D^T}$. 
% \end{prop}
% Due to our assumption of a common labelling function $g$ across domains, the above inequality reduces to 
% \begin{align*}
%     \mathcal{R}_{\mathcal{D}^T}[h] \leq \mathcal{R}_{\mathcal{D}^S}[h] +  d_1(\mathcal{D^S}, \mathcal{D^T}).
% \end{align*}

% This result accounts for a single source domain adapting to a target domain. To tackle the more general problem of domain generalization in which data could be sourced from multiple source domains, we introduce the DDMN (\textbf{D}omain \textbf{D}iscrepancy \textbf{M}inimization \textbf{N}etwork). The DDMN aims to learn identical representations for images having the same label across domains, while learning dissimilar representations for images having different labels. To characterize the action of the DDMN, we introduce the following proposition.\newline
We examine the two terms $\delta$ and $\lambda$. $\delta$ is a measure of the intra-source domain seperation, while $\lambda$ measures the divergence betweeen the source domains and the target domains. In order to minimize $\delta$, we introduce the DDMN (\textbf{D}omain \textbf{D}iscrepancy \textbf{M}inimization \textbf{N}etwork). The DDMN aims to learn identical representations for images having the same label across domains, while learning dissimilar representations for images having different labels. To characterize the action of the DDMN, we introduce the following proposition.\newline

We define the following quantities 
\begin{equation}
\mathbf{P}(\mathcal{X}_i^{[\ell]}) = \mathop{\mathbb{E}}_{x \sim \mathcal{D}_i^S}[\mathbbm{1}_{g(x)=\ell}]\ \forall\ i \in [|S|] 
\end{equation}

\begin{prop}
Given that the following conditions hold:
\begin{enumerate}
    \item There exists a metric space denoted by $(\mathcal{M},d)$ and a transformation function $f : \mathcal{X} \to \mathcal{M}$ such that $d(f(x), f(y)) = 0  \iff g(x)=g(y)$, i.e. $x$ and $y$ have the same labels (irrespective of domain)
    \item All the different classes are equally likely in all the source domains, i.e.
    \begin{equation}
        \mathbf{P}(\mathcal{X}_i^{[\ell]}) = \mathbf{P}(\mathcal{X}_k^{[\ell]}) \quad \forall i, k \in \{1, ..., N\}, \forall \ell \in \mathcal{Y}.
     \end{equation}
    %\item There exists a sampler that can sample from $f(\overline{\mathcal{D}^{S}})$ where $\overline{\mathcal{D}^{S}} \in \Omega_S$ is a mixture distribution.
\end{enumerate}
Then, 
\begin{enumerate}
    \item $\delta = \sup_{i,k \in [|S|]} \Delta_{\mathcal{H'}}[f(\mathcal{D}_i^S), f(\mathcal{D}_k^S)] = 0$, where $\mathcal{H'}$ denotes the space of hypotheses $h' : \mathcal{M} \to \mathcal{Y}$, and $f(\mathcal{D})$ denotes the distribution $\mathcal{D}$ under the transformation $f$ 
\end{enumerate}
\end{prop}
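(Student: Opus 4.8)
The plan is to show that under the two hypotheses the pushforward distributions $f(\mathcal{D}_i^S)$ all coincide, after which the vanishing of every pairwise $\mathcal{H}'$-divergence — and hence of their supremum $\delta$ — is immediate.

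First I would exploit the fact that $d$ is a genuine metric, so $d(a,b)=0 \iff a=b$. Combined with Condition 1, this upgrades the stated equivalence to $f(x)=f(y) \iff g(x)=g(y)$. In words, $f$ is constant on each label class and takes distinct values on distinct classes; writing $m_\ell \in \mathcal{M}$ for the common image of every $x$ with $g(x)=\ell$, the range of $f$ is exactly the finite set $\{m_\ell : \ell \in \mathcal{Y}\}$ (two points in the binary case).

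Next I would compute the pushforward explicitly. Since $f^{-1}(\{m_\ell\}) = \{x : g(x)=\ell\}$, for any measurable $A \subseteq \mathcal{M}$ we get $f(\mathcal{D}_i^S)(A) = \sum_{\ell : m_\ell \in A} \mathbf{P}(\mathcal{X}_i^{[\ell]})$, so $f(\mathcal{D}_i^S)$ is the discrete law placing mass $\mathbf{P}(\mathcal{X}_i^{[\ell]})$ at $m_\ell$. Condition 2 states that these masses do not depend on the domain index $i$, hence $f(\mathcal{D}_i^S)$ and $f(\mathcal{D}_k^S)$ are the same probability measure on $\mathcal{M}$ for every pair $i,k \in [|S|]$ (the source domains being among the $N$ domains to which Condition 2 applies).

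Finally, for identical measures the divergence collapses: for every $h' \in \mathcal{H}'$ we have $\mathbf{P}_{z \sim f(\mathcal{D}_i^S)}[h'(z)=1] = \mathbf{P}_{z \sim f(\mathcal{D}_k^S)}[h'(z)=1]$, so the inner supremum defining $\Delta_{\mathcal{H}'}[f(\mathcal{D}_i^S), f(\mathcal{D}_k^S)]$ is $0$; taking the supremum over all pairs gives $\delta = 0$. I expect the only real subtlety to be the first step — recognizing that the metric axiom forces $f$ to collapse each class to a single atom, which is what turns Condition 1 into an \emph{exact} equality of pushforwards rather than a mere closeness; everything after that is a routine computation with point masses.
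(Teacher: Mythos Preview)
Your proof is correct and reaches the same conclusion as the paper, but the route is a bit different and worth contrasting. The paper, like you, first uses the metric axiom together with Condition~1 to deduce that $g(x)=g(y)\implies f(x)=f(y)$, so $h\circ f$ is constant on each class. From there, however, the paper does not compute the pushforward law; instead it performs a case analysis over the possible behaviors of $h$ on the (at most two) atoms --- $h\equiv 1$, $h\equiv 0$, or $h(f(x))=1\iff g(x)=\ell$ --- and checks in each case that the two probabilities agree, invoking Condition~2 only in the last case. Your argument is more structural: you identify $f(\mathcal{D}_i^S)$ explicitly as the discrete law $\sum_\ell \mathbf{P}(\mathcal{X}_i^{[\ell]})\,\delta_{m_\ell}$, observe via Condition~2 that this is domain-independent, and then note that $\mathcal{H}'$-divergence between identical measures is trivially zero. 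Your route avoids case-splitting, makes no use of the binary-classification restriction, and in fact shows something stronger (equality of the pushforwards, hence vanishing of \emph{any} divergence, not just $\Delta_{\mathcal{H}'}$); the paper's route is more hands-on and stays closer to the definition of $\Delta_{\mathcal{H}'}$.
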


\begin{proof}
For any $i,k\in[|S|]$ and any $h \in \mathcal{H'}$,
\begin{align}
    \left| \mathbf{P}_{x \in \mathcal{D}_i^S}[h(f(x)) = 1]- \mathbf{P}_{x \in \mathcal{D}_k^S}[h(f(x)) = 1]\right|
\end{align}
\begin{equation}
     =\left| \mathop{\mathbb{E}}_{x \in \mathcal{D}_i^S} [\mathbbm{1}_{h(f(x)) = 1}] - \mathop{\mathbb{E}}_{x \in \mathcal{D}_k^S} [\mathbbm{1}_{h(f(x)) = 1}] \right|
\end{equation}
\newline
\newline
Now, note that $g(x)=g(y)\implies d(f(x), f(y)) = 0 \implies f(x)=f(y)$
 $\implies h(f(x))=h(f(y))$. 
\newline 
This means that all the examples of a given class are assigned the same label $\ell$ by $h$, regardless of the source domain. \newline \newline Depending on the labels assigned to the different classes by $h$, these three cases would arise:
\newline
\newline
\textbf{Case 1:} $h(f(x)) = 1 \quad \forall x \in \mathcal{X}$
\begin{equation*}
    \implies \mathop{\mathbb{E}}_{x \in \mathcal{D}_i^S} [\mathbbm{1}_{h(f(x)) = 1}] = \int_{\mathcal{X}} \mathcal{D}_i^S(x) dx = 1 
\end{equation*}
\begin{equation}
    = \int_{\mathcal{X}} \mathcal{D}_k^S(x) dx = \mathop{\mathbb{E}}_{x \in \mathcal{D}_k^S} [\mathbbm{1}_{h(f(x)) = 1}]
\end{equation}
\begin{equation*}
    \therefore \left|\mathbf{P}_{x \in \mathcal{D}_i^S}[h(f(x)) = 1]- \mathbf{P}_{x \in \mathcal{D}_k^S}[h(f(x)) = 1]\right| = 0    
\end{equation*}
\newline
\textbf{Case 2:} $h(f(x)) = 0 \quad \forall x \in \mathcal{X}$
\begin{equation}
    \mathop{\mathbb{E}}_{x \in \mathcal{D}_i^S} [\mathbbm{1}_{h(f(x)) = 1}] = \mathop{\mathbb{E}}_{x \in \mathcal{D}_j^S}[\mathbbm{1}_{h(f(x)) = 1}] = 0    
    \end{equation}
\begin{equation*}
    \therefore \left|\mathbf{P}_{x \in \mathcal{D}_i^S}[h(f(x)) = 1]- \mathbf{P}_{x \in \mathcal{D}_k^S}[h(f(x)) = 1]\right| = 0
\end{equation*}
\newline
\textbf{Case 3:} $h(f(x)) = 1 \iff  g(x)=\ell$ for some $\ell \in \mathcal{Y}$
\begin{equation}
    \mathop{\mathbb{E}}_{x \in \mathcal{D}_i^S} [\mathbbm{1}_{h(f(x)) = 1}] = \mathop{\mathbb{E}}_{x \in \mathcal{D}_i^S} [\mathbbm{1}_{g(x)=\ell}]  = \mathbf{P}(\mathcal{X}_i^{[\ell]})    
\end{equation}

Similarly, 
\begin{equation}
\mathop{\mathbb{E}}_{x \in \mathcal{D}_k^S} [\mathbbm{1}_{h(f(x)) = 1}] = \mathbf{P}(\mathcal{X}_k^{[\ell]})
\end{equation}
\begin{multline}
    \therefore\left|\mathbf{P}_{x \in \mathcal{D}_i^S}[h(f(x)) = 1]- \mathbf{P}_{x \in \mathcal{D}_k^S}[h(f(x)) = 1]\right| \\
 = \left| \mathbf{P}(\mathcal{X}_i^{[\ell]}) - \mathbf{P}(\mathcal{X}_k^{[\ell]}) \right| 
 = 0  
\end{multline}
(from condition 2)
\newline
\newline
Since 
\begin{equation}
\left|\mathbf{P}_{x \in \mathcal{D}_i^S}[h(f(x)) = 1]- \mathbf{P}_{x \in \mathcal{D}_k^S}[h(f(x)) = 1]\right|=0 \ \forall h \in \mathcal{H'}
\end{equation}
we have that 
\begin{multline}
\Delta_\mathcal{H'}[f(\mathcal{D}_i^S), f(\mathcal{D}_k^S) ] = \\\sup_{h \in \mathcal{H'}}\left|\mathbf{P}_{x \in \mathcal{D}_i^S}[h(f(x)) = 1]- \mathbf{P}_{x \in \mathcal{D}_k^S}[h(f(x)) = 1]\right| = 0
\end{multline}
\newline
\newline
Since $\Delta_\mathcal{H'}[f(\mathcal{D}_i^S), f(\mathcal{D}_k^S) ] = 0$ for all $i, k \in [|S|]$, it follows that 
\begin{equation}
\delta = \sup_{i, k \in [|S|]}\Delta_\mathcal{H'}[f(\mathcal{D}_i^S), f(\mathcal{D}_k^S) ] = 0
\end{equation}
\end{proof}
Thus, using a transformation $f$ satisfying the conditions in Proposition 2, the multi-source Domain Generalization problem can be reduced to a single source problem. Throughout this paper the transformation of the source domains under $f$ is referred to as the transformed source domain. However, we are still left with the task of generalizing to the unseen domain. This is achieved through the Nearest Neighbor Sampling, which is inspired by the 1-Nearest Neighbor algorithm \citep{Cover1967NearestNP}. 
\newline 
We mathematically analyze the effectiveness of this procedure in the task of domain generalization. To this end, we prove the following upper bound on the risk on the target domain.
\begin{prop}
Given a source domain $f(\mathcal{D^S})$, target domain $f(\mathcal{D^T})$ and given that we use a norm-based loss function, we have :
\begin{equation}
{\mathcal{R}_{f(\mathcal{D^T})}}[h] \leq  {{\mathcal{R}_{f(\mathcal{D^S})}}[h]} + {2B^{*}(1 - B^{*})}
\end{equation}
where h is a given hypothesis and $B^{*}$ denotes the Bayes risk on $f(\mathcal{D^T})$
\end{prop}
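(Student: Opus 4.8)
The plan is to reproduce the classical Cover--Hart analysis of the 1-Nearest Neighbor rule, adapted to the transformed source/target setting and exploiting the assumed norm-based form of $\mathcal{L}$. The governing idea is a triangle-inequality decomposition of the pointwise loss. For a target point $x^T$ together with the source point $x^S$ retrieved by the nearest-neighbor sampling procedure from $f(\mathcal{D^S})$, I would write
\begin{equation}
\norm{h(f(x^T)) - g(x^T)} \leq \norm{h(f(x^T)) - h(f(x^S))} + \norm{h(f(x^S)) - g(x^S)} + \norm{g(x^S) - g(x^T)}.
\end{equation}
Taking the expectation over $x^T \sim f(\mathcal{D^T})$ (with $x^S$ its induced nearest neighbor) then separates the target risk into three contributions, each handled independently.

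First I would dispose of the transfer term $\mathbb{E}\,\norm{h(f(x^T)) - h(f(x^S))}$. In the regime where the nearest neighbor $f(x^S)$ collapses onto the query $f(x^T)$ in $(\mathcal{M},d)$ — which is exactly the idealization encoded by Proposition 2, where points of equal label are identified and the retrieved neighbor sits at distance zero — continuity of $h$ forces this term to vanish. The middle term integrates precisely to the source risk $\mathcal{R}_{f(\mathcal{D^S})}[h]$. It therefore remains to bound the label-disagreement term $\mathbb{E}\,\norm{g(x^S) - g(x^T)}$, which, for binary labels encoded by a norm, is simply the probability that a query and its nearest neighbor carry different labels.

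The crux of the argument — and the step I expect to be the main obstacle — is showing this disagreement probability is at most $2B^{*}(1-B^{*})$. Following Cover--Hart, I would condition on the regression function $\eta(x) = \mathbf{P}(g(x)=1 \mid x)$ and use that, as the neighbor converges to the query, the two points behave as conditionally independent draws at a common location, so their labels disagree with conditional probability $2\eta(x)(1-\eta(x))$. Writing $m(x) = \min(\eta(x), 1-\eta(x))$, so that $\eta(1-\eta) = m(1-m)$ and the Bayes risk satisfies $B^{*} = \mathbb{E}[m(x)]$, the disagreement term equals $2\,\mathbb{E}[m(1-m)]$. The final reduction
\begin{equation}
2\,\mathbb{E}[m(1-m)] = 2\bigl(\mathbb{E}[m] - \mathbb{E}[m^2]\bigr) \leq 2\bigl(\mathbb{E}[m] - (\mathbb{E}[m])^2\bigr) = 2B^{*}(1-B^{*})
\end{equation}
is then immediate from Jensen's inequality, since $\mathbb{E}[m^2] \geq (\mathbb{E}[m])^2$. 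Combining the vanishing transfer term, the source risk, and this bound yields the claimed inequality; the delicate points to make rigorous are the asymptotic convergence of the nearest neighbor in $(\mathcal{M},d)$ and the conditional-independence step that produces the factor $2\eta(1-\eta)$.
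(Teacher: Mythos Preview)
Your approach is essentially the paper's: a triangle-inequality split followed by the Cover--Hart bound on the label-disagreement term. Two differences are worth flagging.

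First, the paper uses a \emph{two}-term decomposition, not three. This is because in the paper the target risk is defined as the risk of the nearest-neighbor-then-$h$ procedure, i.e.\ the loss is $\lvert g(f(x_T)) - h(f(x_S))\rvert$ with $h$ evaluated at the retrieved source point $f(x_S)$, not at $f(x_T)$. Under that reading the triangle inequality gives directly
\[
\lvert g(f(x_T)) - h(f(x_S))\rvert \;\leq\; \lvert g(f(x_S)) - h(f(x_S))\rvert \;+\; \lvert g(f(x_T)) - g(f(x_S))\rvert,
\]
and your ``transfer term'' $\norm{h(f(x^T)) - h(f(x^S))}$ never appears; no continuity of $h$ is needed. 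Your three-term version is what one writes if $\mathcal{R}_{f(\mathcal{D^T})}[h]$ is read literally as the risk of $h$ applied to $f(x^T)$, which is a natural reading of the notation but not what the method actually computes at inference time.

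Second, you supply the Cover--Hart argument in full (the $2\eta(1-\eta)$ conditional disagreement and the Jensen step $\mathbb{E}[m^2]\geq(\mathbb{E}[m])^2$), whereas the paper simply cites \citep{Cover1967NearestNP} for the inequality $\mathbb{E}\lvert g(f(x_T))-g(f(x_S))\rvert \leq 2B^{*}(1-B^{*})$ under the same convergence hypothesis $x_S\to x_T$. Your derivation is correct and more self-contained; the paper's is shorter.
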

\begin{proof}
Let $x_T$ denote that point from the target domain $\mathcal{D^T}$ to be classified and let $h$ denote a hypothesis. Let $x_S$ denote the nearest neighbour of $x_T$ in the source domain $f(\mathcal{D^S})$ w.r.t some metric $d$. \citep{pandey2020target} have shown the existence and convergence of such a nearest neighbour in the source domain.\\ 
First, note that
\begin{multline}
\left| g(f(x_T)) - h(f(x_S)) \right| \leq \left| g(f(x_S)) - h(f(x_S)) \right| + \\ \left| g(f(x_T)) - g(f(x_S)) \right|
\end{multline}
Taking expectation on both sides, it follows from Linearity of Expectation that
\begin{align}
    {\mathcal{R}_{f(\mathcal{D^T})}}[h] &= \mathbb{E}_{\mathcal{D^T}} \left| g(f(x_T)) - h(f(x_S)) \right|
    \\
    &\leq \mathbb{E}_{\mathcal{D^S}} \left| g(f(x_S)) - h(f(x_S)) \right|\\& + \mathbb{E}_{\mathcal{D^T}} \left| g(f(x_T)) - g(f(x_S)) \right|   \\
    &= {\mathcal{R}_{f(\mathcal{D^S})}}[h] + \mathbb{E}_{\mathcal{D^T}} \left| g(f(x_T)) - g(f(x_S)) \right| 
\end{align}
The first term indicates the risk due to misclassification by the source domain classifier $h$, while the second term represents the error due to the latent search procedure.
\citep{Cover1967NearestNP} prove that if $x_S \xrightarrow{} x_T$ with probability 1 then: 
\begin{equation}
\mathbb{E}_{\mathcal{D^T}} \left| g(f(x_T)) - g(f(x_S)) \right| \leq 2B^{*}(1 - B^{*})    
\end{equation}
where $B^{*}$ is the Bayes Risk on $f(\mathcal{D^T})$
\newline
It therefore follows that
\begin{equation}
{\mathcal{R}_{f(\mathcal{D^T})}}[h] \leq  {\mathcal{R}_{f(\mathcal{D^S})}}[h] + 2B^{*}(1 - B^{*})    
\end{equation}
\end{proof}
The Bayes Risk quantifies the inherent uncertainty in the data generation process. To better interpret this bound, we introduce the following lemma.
\begin{lemma}
For a joint distribution $\mathcal{D}_{\mathcal{X},\mathcal{Y}}$, if \newline $\text{Var}(Y|X=x)<\sigma^2 \quad \forall x \in \supp({\mathcal{D}_{\mathcal{X},\mathcal{Y}}})$, then we have 
\begin{equation}
2B^*(1-B^*) < 2\sigma^2
\end{equation}
where $B^*$ represents the Bayes Risk over ${\mathcal{D}_{\mathcal{X},\mathcal{Y}}}$
\end{lemma}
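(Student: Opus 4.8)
The plan is to reduce everything to the one-dimensional geometry of the map $\phi(t) = t(1-t)$ on $[0,1/2]$. Since we work in the binary setting, for each $x$ the conditional law $Y \mid X = x$ is Bernoulli with parameter $\eta(x) = \mathbf{P}(Y = 1 \mid X = x)$, so that $\mathrm{Var}(Y \mid X = x) = \eta(x)(1-\eta(x))$. Writing $b(x) = \min(\eta(x), 1 - \eta(x))$ for the pointwise Bayes error, I would first record the elementary identity $\eta(x)(1-\eta(x)) = b(x)(1 - b(x))$, which holds because $\max(\eta(x), 1-\eta(x)) = 1 - b(x)$. The Bayes risk is the average of the pointwise error, $B^* = \mathbb{E}_X[b(X)]$, and crucially $b(x) \in [0, 1/2]$ for every $x$. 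The hypothesis of the lemma thus reads $b(x)(1-b(x)) < \sigma^2$ for all $x$ in the support.

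The key observation is that $\phi(t) = t(1-t)$ is strictly increasing on $[0, 1/2]$, so the pointwise bound $\phi(b(x)) < \sigma^2$ translates into a uniform ceiling on $b(x)$ itself. Assuming first the nontrivial regime $\sigma^2 < 1/4$, let $\beta \in [0, 1/2)$ be the unique root of $\phi(\beta) = \sigma^2$; then $\phi(b(x)) < \phi(\beta)$ forces $b(x) < \beta$ for all $x$. Taking expectations over $X$ and using that $\beta - b(X) > 0$ almost surely gives $B^* = \mathbb{E}_X[b(X)] < \beta$. Applying the same monotonicity once more, now to the constant $B^*$ with $B^* < \beta < 1/2$, yields $B^*(1-B^*) = \phi(B^*) < \phi(\beta) = \sigma^2$, and multiplying by $2$ gives the claim. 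The degenerate regime $\sigma^2 \ge 1/4$ is handled separately and trivially: $b(X) \le 1/2$ forces $B^* \le 1/2$ and hence $B^*(1-B^*) \le 1/4 \le \sigma^2$, with the strict inequality recovered because $B^* = 1/2$ would require $b(X) = 1/2$ almost surely, violating the strict hypothesis.

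I expect the main subtlety to be resisting the temptation to apply Jensen's inequality directly. Because $\phi$ is concave, Jensen would give $\mathbb{E}[\phi(b(X))] \le \phi(\mathbb{E}[b(X)]) = B^*(1-B^*)$, which bounds $B^*(1-B^*)$ from \emph{below} by the averaged conditional variance and is therefore useless for an upper bound. The correct mechanism is that the constraint $b(x) \le 1/2$ confines all the action to the increasing branch of $\phi$, so that \emph{monotonicity}, not convexity, is the tool that pulls the inequality through the expectation in the right direction. The remaining points requiring care are the strictness of each inequality (both $B^* < \beta$ and the final step), and a quick sanity check that the bound is sharp: taking $b(x)$ constant and slightly below $\beta$ drives $2B^*(1-B^*)$ toward $2\sigma^2$ without reaching it, confirming that the restriction to the increasing branch is essential rather than an artifact of the argument.
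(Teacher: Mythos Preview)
Your argument is correct and follows essentially the same route as the paper: both reduce to the monotonicity of $t\mapsto t(1-t)$ on $[0,1/2]$, first translating the pointwise variance bound into $b(x)<\beta$ (the paper writes $\beta=\tfrac12-\sqrt{\tfrac14-\sigma^2}$ explicitly), then averaging to get $B^*<\beta$, and finally applying the same monotonicity to $B^*$. Your treatment is in fact slightly more careful, since you separately dispose of the regime $\sigma^2\ge 1/4$, which the paper's formula for $\beta$ tacitly excludes.
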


\begin{proof}
Suppose $Y | X=x \sim \text{Bernoulli}(p)$.
\begin{align*}
&\text{Var}(Y|X=x)<\sigma^2 \quad \forall x \in \mathcal{X} \\
&\implies p(1-p) < \sigma^2\\
&\implies p \in \left(0, \frac{1}{2} - \sqrt{\frac{1}{4} - \sigma^2}\right) \cup \left(\frac{1}{2} + \sqrt{\frac{1}{4} - \sigma^2}, 1 \right) \\ 
&\therefore \min(p, 1-p) < \frac{1}{2} - \sqrt{\frac{1}{4} - \sigma^2}
\end{align*}

Now, note that
$B^* = \mathbb{E}[\min(\eta_1(x), 1-\eta_1(x))]$, where $\eta_1(x)=\mathbf{P}(Y=1|X=x) = p$
\begin{align}
&\implies B^* = \mathbb{E}[\min(p, 1-p)] \nonumber\\
&= \int_{x \in \supp({\mathcal{D}_{\mathcal{X},\mathcal{Y}}})}{\min(p, 1-p) \; dP_x} \nonumber\\
&< \int_{x \in \supp({\mathcal{D}_{\mathcal{X},\mathcal{Y}}})}\left(\frac{1}{2} - \sqrt{\frac{1}{4} - \sigma^2}\right)  dP_x \nonumber\\
&= \left(\frac{1}{2} - \sqrt{\frac{1}{4} - \sigma^2}\right)\label{eq:bayes_ineq}
\end{align}
Note that $B^* \leq 0.5$, and that the transformation $\kappa(x) = 2x(1-x)$ is monotonically increasing for $x \in [0, 0.5]$. Applying this transformation on both sides of the inequality \eqref{eq:bayes_ineq} we get
\begin{equation}
    2B^*(1-B^*) < 2\sigma^2
\end{equation}
\end{proof}
Note that using Lemma 1, the inequality from Proposition 3 becomes:
\begin{equation}
\mathcal{R}_{f(\mathcal{D^T})}[h] \leq  \mathcal{R}_{f(\mathcal{D^S})}[h] + 2\sigma^2
\end{equation}
Notice that the upper bound only depends on the source risk which we minimize by training a classifier on the source domain and the variance in the inherent data generating process in the target domain, unlike the bound from Proposition 1 which depends on $\lambda$, the distributional divergence between the source and target domains. Further, we argue that the variance $\sigma^2$ is low in most practical applications, since we assume that the underlying data-generation process is well-behaved. For example, in an image classification task on MNIST digits, the inherent uncertainty in the true digit given the image is quite low.

% \section{Datasets}
% In this section we provide details about the datasets that we experiment on. We experiment on the following standard Domain Generalization datasets: \newline \newline
% \textbf{PACS: } Stands for \textbf{P}hoto, \textbf{A}rt painting, \textbf{C}artoon and \textbf{S}ketch. PACS has one of the highest inter-source domain divergences among all the standard DG datasets. Each one of the 9991 images has been assigned 1 out of the 7 possible class labels. We follow the experimental procedure defined in \citep{li2017deeper} wherein we train a classifier on 3 out of 4 domains and test it on the fourth domain. \newline \newline
% \textbf{VLCS: } Stands for \textbf{V}OC2007(Pascal), \textbf{L}abelMe, \textbf{C}altech and \textbf{S}un. This dataset is an aggregation of 4 different datasets that are treated as different domain and each image is assigned  1 out of 5 class labels. We follow the same experimental setup as mentioned in \citep{matsuura2020domain} where we train on 3 domains with 70\% data from each and test on the whole fourth domain.\newline \newline
% \textbf{Office-Home: } This dataset contains images of 65 different types of daily-use items from 4 domains namely Artistic, Clipart, Product and Real-World. Product images are taken from vendor websites and the Real-world images are taken using a regular camera. We follow the experimental procedure as outilined in \citep{d2018domain}.

\begin{figure}
\includegraphics[width=.42\textwidth,height=.34\textwidth]{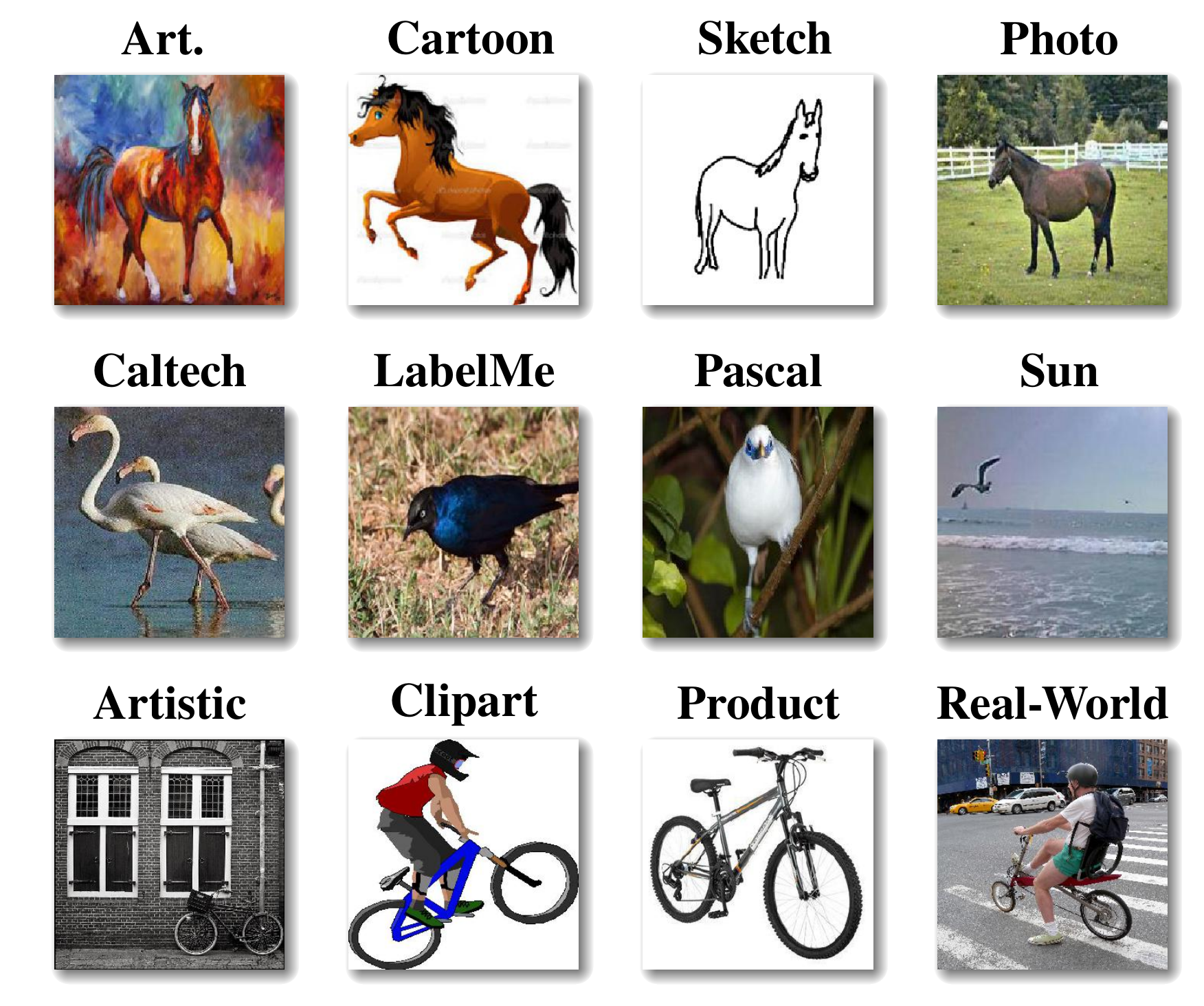}
\caption{Few example images from PACS (1st row), VLCS (2nd row) and Office-Home (3rd row) datasets.}
\label{fig:dg_samples}
\end{figure}

\section{Proposed Method}
In this section, we explain our proposed idea in detail. An overview of our method can be seen in (figure no.). Given a collection of source domains and a collection of target domains, we train a classifier on the source domains and test it on the target domains. We propose to learn a domain agnostic feature space for the source images where the divergence between the different source distributions is minimized. We achieve this using the Domain Discrepancy Minimization Network, which aims to learn such a transformation by parameterizing it as a neural network. Once we have learnt a domain agnostic feature space we further try to minimize the target risk by querying a nearest neighbor from the transformed source domain to a given target feature. To guarantee better convergence, we propose using a generative model (a variational autoencoder in this case) that can sample from the transformed source distribution, thus effectively giving us infinite samples from the transformed source domain. The details of the DDMN and the Nearest Neighbour Sampler (NNS) are provided in the following subsections.  
\begin{algorithm}

\SetAlgoLined

\textbf{Input:} Batch size $N$, structure of similarity network $f$ with parameters $\theta$, constant $\tau$;\\
\KwResult{Trained similarity network $f$ }
 \For{ \text{sampled minibatch} $\{(x_k, y_k)\}^N_{k=1}$ } {
  
\For{all $i \in \{1,...N\}$ and $j \in \{1,...N\}$ } {
%f(x_i)^\intercal f(x_j)/(\left\lVert f(x_i) \right\rVert \left\lVert f(x_j) \right\rVert)
$ s_{i,j} = \text{sim}(f(x_i), f(x_j))$
}
\textbf{define} $y_{i,j} =  \begin{cases}
    1, & \text{if } y_i = y_j\\
    0, & \text{otherwise}
\end{cases}$

\textbf{define} $p_{i,j} = \text{sigmoid}(s_{i,j}/\tau)$\\
$\mathcal{L_\theta} = -\frac{1}{N^2} \sum_{i=1}^{N} \sum_{j=1}^{N} (y_{i,j} \log(p_{i,j}) + (1-y_{i,j}) \log(1-p_{i,j}))$\\
\text{Update parameters} $\theta$ \text{to minimize } $\mathcal{L_\theta}$
}
\caption{Domain Similarity Network training}

\end{algorithm}

\begin{algorithm}
\SetAlgoLined
\textbf{Input:} Target image representation $\chi_T$, trained representation VAE decoder $g_\phi$, learning rate $\eta$;\\
\KwResult{``Nearest Neighbour" representation $\hat{\chi}$}

 Sample $\hat{z}$ from $\mathcal{N}(0,1)$;\\
 \Repeat{ \text{convergence of } $\hat{z}$ } {
 $\hat{\chi} = g_\phi(\hat{z})$\\
 $\mathcal{L}_\chi = 1 - \text{sim}(\hat{\chi}, \chi_T)$\\
 $\hat{z} = \hat{z} - \eta \nabla_{\hat{z}} \mathcal{L}_\chi$\\
 
  }
\caption{Nearest Neighbor Sampler}

\end{algorithm}
\subsection{Domain Discrepancy Minimization}
In this subsection we provide an overview of the proposed novel Domain Discrepancy Minimization technique. Our proposed method reduces the inter source domain divergence without using domain labels. We aim to solve the following optimization problem on the combined source domain examples so as to learn a transformation function $f:\mathcal{X} \to \mathcal{M}$ where $\mathcal{M}$ is a domain-agnostic feature space. We assume there are $N$ training examples.
\begin{argmini}
   {f}{\sum_{j=1}^{N}{{\sum_{i=1}^{N}{ (-1)^{\alpha(i,j)}.\norm{f(x_{i}) - f(x_{j})}^{2}}}}}
    {}{}
    \addConstraint{\norm{f(x_{k})}}{=1\ \forall k \in [N]}
\end{argmini}
where \[ \alpha(i,j) = \begin{cases} 
          0 & g(x_{i})=g(x_{j}) \\
          1 & \text{otherwise} 
       \end{cases}
    \]
We note that this problem reduces to minimizing the cosine-similarity between $f(x_i)$ and $f(x_j)$. Thus, we optimize the loss function:
\begin{equation}
    \mathcal{L}_\theta =\frac{1}{N^2} \sum_{i=1}^N \sum_{j=1}^N  (-1)^{\alpha(i,j)}  \frac{f(x_i) \cdot f(x_j)}{\left\lVert f(x_i) \right\rVert\left\lVert f(x_j) \right\rVert} 
\end{equation}
We use a neural network, parameterized by $\theta$ to approximately learn the transformation $f$. We now discuss the architecture of the DDMN.
\newline
We use a deep convolutional network (like ResNet 18 or AlexNet pretrained on ImageNet) as the backbone network and extract features from its last layer. For every pair of feature vectors in a batch, we maximize or minimize the cosine similarity between the representations depending upon whether they have similar or dissimilar labels. Thus we expect images from the same class to have similar feature vectors irrespective of domains. Algorithm 1 highlights the algorithm used to train the DDMN. 
% We report the empirical inter-source domain  H-divergence in (figure) which highlights the effectiveness of the DDMN. 
% To perform the classification task, we train a simple neural network on the domain-invariant representations.
\subsection{Nearest Neighbour Sampler}
While the DDMN effectively minimizes the inter-source domain divergence, there are no guarantees on it's effectiveness on the target domain. To ensure effective generalization on the target domain we use the Nearest Neighbor Sampler (NNS) that can sample from the source domain feature space. \newline
% The proof in proposition 3 holds if we have access to infinite samples from the source domain feature space. 
We use a VAE (Variational Autoencoder) that can act as a sampler from the source domain feature space allowing infinite sampling. We use the decoder of the VAE to query the nearest neighbour to a given target feature vector. Algorithm 2 demonstrates our procedure for sampling the nearest neighbour for a target feature vector. 
%The experiments in the next section show that using the NNS is actually better than using the ordinary nearest neighbour algorithm because it provides better convergence.\newline \newline
Once we have trained the DDMN and the NNS, we train a simple 2 hidden-layer feed forward neural network on the source domain feature vectors. During inference on a particular target example, we first extract it's feature vector using the DDMN, then we query it's nearest neighbour using the NNS. We then infer the class of the target example from the sampled nearest neighbour through the trained classifier. 
\section{Experiments and Results}
\begin{table}[hbt!]
\centering
  \scalebox{0.99}{
  \begin{tabular}{lccccc}
    \toprule
        {PACS} & {Art.}
        & {Cartoon}
        & {Sketch} & {Photo} & {Avg.}
        \\
    \midrule
    \multicolumn{6}{c}{AlexNet}\\
    % AlexNet
    % \midrule
    % ResNet-18
    % \midrule
    \midrule
    Deep All&65.96&69.50&59.89&89.45&71.20\\
    % Fusion&64.10&66.80&60.10&90.20&70.30\\
    D-SAM&63.87&70.70&64.66&85.55&71.20\\
     Epi-FCR&64.70&72.30&65.00&86.10&72.02\\
     MetaReg&69.82&70.35&59.26&\textbf{91.07}&72.62\\
    Jigen&67.63&71.71&65.18&89.00&73.38\\

    MMLD&69.27&72.83&66.44&88.98&74.38\\
    
    % DDAIG&84.20&78.10&74.70&95.30&83.10\\
    MASF&70.35&72.46&67.33&90.68&75.21\\
    Ours&\textbf{71.23}&\textbf{74.16}&\textbf{70.81}&90.90&\textbf{76.77}\\
    \midrule
    \multicolumn{6}{c}{ResNet-18}\\
    \midrule
    Deep All&77.65&75.36&69.08&95.12&79.30\\
    Jigen&79.42&75.25&71.35&96.03&80.51\\
    CSD&79.79&75.04&72.46&95.45&80.69\\
    D-SAM&77.33&72.43&77.83&95.30&80.72\\
    MASF&80.29&77.17&71.69&94.99&81.03\\
    Epi-FCR&82.10&77.00&73.00&93.90&81.50\\
    MetaReg&83.70&77.20&70.30&95.50&81.70\\
    MMLD&81.28&77.16&72.29&96.09&81.83\\
    
    DDAIG&84.20&78.10&74.70&95.30&83.10\\
    Ours&\textbf{84.32}&\textbf{79.68}&\textbf{80.21}&\textbf{96.69}&\textbf{85.22}\\
    \bottomrule
  \end{tabular}
  }
  \caption{Comparison of performance between different models using AlexNet and ResNet-18 on PACS.}
  \label{table:pacs}
\end{table}

\begin{table}[hbt!]
\centering
  \scalebox{0.92}{
  \begin{tabular}{lccccc}
    \toprule
        {VLCS} & {Caltech}
        & {LabelMe}
        & {Pascal} & {Sun} & {Avg.}
        \\
    \midrule
    \multicolumn{6}{c}{AlexNet}\\
    % AlexNet
    % \midrule
    % ResNet-18
    % \midrule
    \midrule
    Deep All&96.45&60.03&70.41&62.63&72.38\\
    MDA&92.76&62.34&65.25&63.54&70.97\\
    MMD-AAE&94.40&62.60&67.60&64.40&72.30\\
     Epi-FCR&94.10&64.30&67.10&65.90&72.90\\
    %  MetaReg&69.82&70.35&59.26&91.07&72.62\\
    Jigen&96.93&60.90&70.62&64.30&73.19\\

    MMLD&96.66&58.77&71.96&68.13&73.88\\
    
    % DDAIG&84.20&78.10&74.70&95.30&83.10\\
    MASF&94.78&64.90&69.14&67.64&74.11\\
    Ours&\textbf{97.09}&\textbf{65.01}&\textbf{73.17}&\textbf{68.22}&\textbf{75.87}\\
    % \midrule
    % \multicolumn{6}{c}{ResNet-18}\\
    % \midrule
    
    % Jigen&79.42&75.25&71.35&96.03&80.51\\
    % CSD&79.79&75.04&72.46&95.45&80.69\\
    % D-SAM&77.33&72.43&77.83&95.30&80.72\\
    % MASF&80.29&77.17&71.69&94.99&81.03\\
    % Epi-FCR&82.10&77.00&73.00&93.90&81.50\\
    % MetaReg&83.70&77.20&70.30&95.50&81.70\\
    % MMLD&81.28&77.16&72.29&96.09&81.83\\
    
    % DDAIG&84.20&78.10&74.70&95.30&83.10\\
    % Ours&\textbf{84.32}&\textbf{79.68}&\textbf{80.21}&\textbf{96.69}&\textbf{85.22}\\
    \bottomrule
  \end{tabular}
  }
  \caption{Comparison of performance between different models using AlexNet backbone on VLCS. Unlike dataset like PACS where domains differ in image styles, VLCS domains are all photos. Our method outperforms SOTA methods even if domains do not vary in image styles and are similar pointing to the fact that the proposed method doesn't assume any form for the domain shift.}
  \label{table:vlcs}
\end{table}
% MMLD is @article{matsuura2019domain,
%   title={Domain Generalization Using a Mixture of Multiple Latent Domains},
%   author={Matsuura, Toshihiko and Harada, Tatsuya},
%   journal={arXiv},
%   pages={arXiv--1911},
%   year={2019}
% }
% \section{Ablation Study}
In this section we provide the details on the experiments conducted by us and the underlying results. We use the following standard DG datasets to demonstrate our results: \newline \newline
\textbf{PACS: } Stands for \textbf{P}hoto, \textbf{A}rt painting, \textbf{C}artoon and \textbf{S}ketch. PACS has one of the highest inter-source domain divergences among all the standard DG datasets. Each one of the 9991 images has been assigned 1 out of the 7 possible class labels. We follow the experimental procedure defined in \citep{li2017deeper} wherein we train a classifier on 3 out of 4 domains and test it on the fourth domain. We achieve state-of-the-art results on PACS using both AlexNet and ResNet-18 backbones as is evident in Table 1. We improve upon the current state-of-the-art baselines an each and every domain using a ResNet-18 architecture.
\newline \newline
\textbf{VLCS: } Stands for \textbf{V}OC2007(Pascal), \textbf{L}abelMe, \textbf{C}altech and \textbf{S}un. This dataset is an aggregation of 4 different datasets that are treated as different domains and each image is assigned  1 out of 5 class labels. We follow the same experimental setup as mentioned in \citep{matsuura2020domain} where we train on 3 domains with 70\% data from each and test on the whole fourth domain. We achieve state-of-the-art results on VLCS using the AlexNet backbone as is evident in Table 2. We improve upon the current state-of-the-art baselines on each and every domain which demonstrates the effectiveness of our method. \newline \newline
% \textbf{Office-Home: } This dataset contains images of 65 different types of daily-use items from 4 domains namely Artistic, Clipart, Product and Real-World. Product images are taken from vendor websites and the Real-world images are taken using a regular camera. We follow the experimental procedure as outilined in \citep{d2018domain}.
\section{Conclusion}
We provide a theoretical upper bound for the risk in the problem of domain generalization with multiple source and target domains that is significantly tighter than the existing state-of-the-art methods. We have shown that while preserving the class information, the $\mathcal{H}$-divergence across the source domains can be made zero. Also, the risk on the target domain is shown to be upper bounded by the variance in the labeling procedure of the target domain. We employ a Domain discrepancy minimization network to preserve class information in the source domains while minimizing the divergence across the source domains to zero. Using a generative nearest neighbor sampling algorithm, the neighbors for the target samples are retrieved that helps to reduce the domain shift between source and the target domains and thereby increasing the performance of the classifier learned with domain agnostic features from the Domain discrepancy minimization network. 
% To the best of our knowledge, this is the first method that provides a theoretical bound on divergence between the source and the target domains in a domain generalization setting.
\bibliographystyle{aaai}
\bibliography{mybibliography}
\end{document}